\newtheorem{thm}{Theorem}
\newtheorem{lemma}{Lemma}
\def \x {\mathbf{x}}
\def \R {\mathbb{R}}
\def \Ah {\widehat{A}}
\def \Mh {\widehat{M}}
\def \N {\mathcal{N}}
\def \Gh {\widehat{G}}
\def \Ocal {\mathcal{O}}
\def \D {\mathcal{D}}
\def \N {\mathcal{N}}
\def \u {\mathbf{u}}
\def \E {\mathrm{E}}
\begin{document}

\title{Fine-Grained Visual Categorization via \\Multi-stage Metric Learning}

\author{Qi Qian$^1$\quad Rong Jin$^1$\quad Shenghuo Zhu$^2$\quad Yuanqing Lin$^3$\\
$^1$Department of Computer Science and Engineering\\
Michigan State University, East Lansing, MI, 48824, USA\\
$^2$Alibaba Group, Seattle, WA, 98101, USA\\
$^3$NEC Laboratories America, Cupertino, CA, 95014, USA\\
{\tt\small \{qianqi, rongjin\}@cse.msu.edu, shenghuo.zhu@alibaba-inc.com, ylin@nec-labs.com}
}

\maketitle

\begin{abstract}

Fine-grained visual categorization (FGVC) is to categorize objects into subordinate classes instead of basic classes. One major challenge in FGVC is the co-occurrence of two issues: 1) many subordinate classes are highly correlated and are difficult to distinguish, and 2) there exists the large intra-class variation (e.g., due to object pose). This paper proposes to explicitly address the above two issues via distance metric learning (DML). DML addresses the first issue by learning an embedding so that data points from the same class will be pulled together while those from different classes should be pushed apart from each other; and it addresses the second issue by allowing the flexibility that only a portion of the neighbors (not all data points) from the same class need to be pulled together. However, feature representation of an image is often high dimensional, and DML is known to have difficulty in dealing with high dimensional feature vectors since it would require $\Ocal(d^2)$ for storage and $\Ocal(d^3)$ for optimization. To this end, we proposed a multi-stage metric learning framework that divides the large-scale high dimensional learning problem to a series of simple subproblems, achieving $\Ocal(d)$ computational complexity. The empirical study with FVGC benchmark datasets verifies that our method is both effective and efficient compared to the state-of-the-art FGVC approaches.

\end{abstract}

\section{Introduction}


Fine-grained visual categorization (FGVC) aims to distinguish objects in subordinate classes. For example, dog images are classified into different breeds of dogs, such as ``Chihuahua'', ``Pug'', ``Samoyed'' and so on~\cite{khosla2011novel,ParkhiVZJ12}. One challenge of FGVC is that it has to handle the co-occurrence of two somewhat contradictory requirements: 1) it needs to distinguish
\begin{figure}[!ht]
\centering
\includegraphics[width = 3in,height = 2.2in]{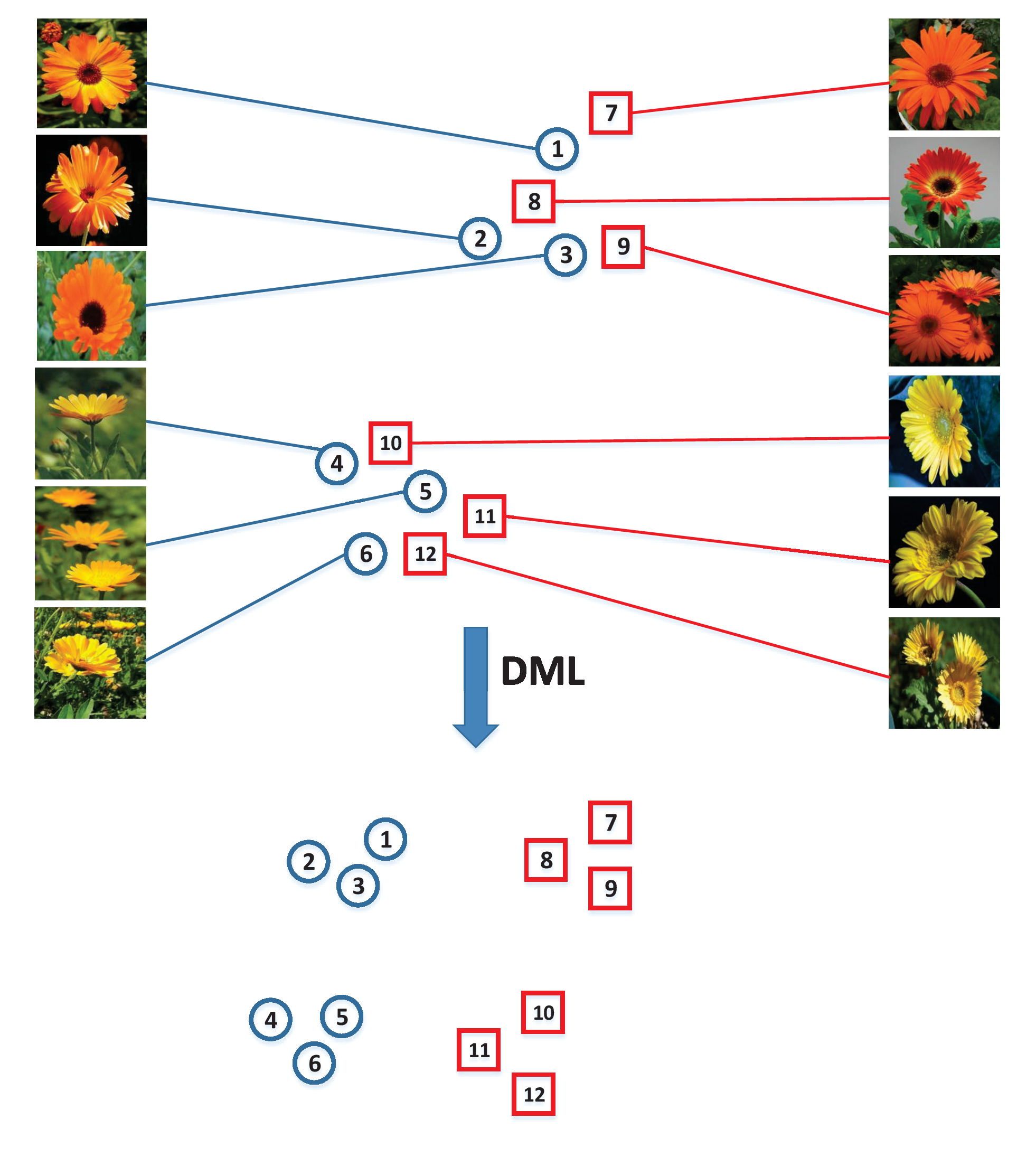}
\caption{Illustration of how DML learns the embedding that pulls together the data points from the same class and pushes apart the data points from different classes. Blue points are from the class ``English marigold'' while red ones are ``Barberton daisy''. An important note here is that our approach does not require to collapse all data points from each class to a single cluster and this allows the flexibility to model the intra-class variation.}\label{fig:1}
\end{figure}
many similar classes (e.g., the dog breeds that only have subtle differences), and 2) it needs to deal with the large intra-class variation (e.g., caused by different poses, examples, etc.).


The popular pipeline for FVGC consists of two steps, feature extraction step and classification step. The feature extraction step, which sometimes combines with segmentation~\cite{anelia13,ChaiRLGZ12,ParkhiVZJ12}, part localization~\cite{berg13,YangBWS12} or both~\cite{ChaiLZ13}, is to extract image level representations, and popular choices include LLC features~\cite{anelia13}, Fisher vectors~\cite{GavvesFSST13}, etc. A recent development is to train the convolutional neural network (CNN)~\cite{KrizhevskySH12} on a large-scale image dataset (e.g., ImageNet~\cite{ILSVRCarxiv14}) and then use the trained model to extract features~\cite{DonahueJVHZTD14}. The so-called deep learning features have demonstrated the state-of-the-art performance on FGVC datasets~\cite{DonahueJVHZTD14}. Note that there has been some difficulties in training CNN directly on FGVC datasets because the existing FGVC benchmarks are often too small~\cite{DonahueJVHZTD14} (only several tens of thousands of training images or less). In this paper, we simply take the state-of-the-art deep learning features without any other operators (e.g., segmentation) and focus on studying better classification approach to address the aforementioned two co-occurring requirements in FGVC.

For the classification step, many existing FGVC methods directly learn a single classifier for each fine-grained class using the one-vs-all strategy~\cite{anelia13,berg13,ChaiRLGZ12,YangBWS12}. Apparently, this strategy does not scale well to the number of fine-grained classes while the number of subordinate classes in FGVC could be very large (e.g., 200 classes in {\it birds11} dataset). Additionally, such one-vs-all scheme is only to address the first issue in the two issues, namely, it makes efforts to separate different classes without modeling intra-class variation. In this paper, we proposes a distance metric learning (DML) approach, aiming to explicitly handle the two co-occurring requirements with a {\it single} metric. Fig.~\ref{fig:1} illustrates how DML works for FGVC. It learns a distance metric that pulls neighboring data points of the same class close to each other and pushes data points from different classes far apart. By varying the neighborhood size when learning the metric, it is able to effectively handle the tradeoff between the inter-class and intra-class variation. With a learned metric, a $k$-nearest neighbor classifier will be applied to find the class assignment for a test image.


Although numerous algorithms have been developed for DML~\cite{ChechikSSB10,DavisKJSD07,weinberger2009,XingNJR02}, most of them are limited to low dimensional data (i.e. no more than a few hundred dimensions) while the dimensionality of image data representation is usually higher than $10,000$~\cite{anelia13}. A straightforward approach toward high dimensional DML is to reduce the dimensionality of data by using the methods such as principle component analysis (PCA)~\cite{weinberger2009} and random projection~\cite{TsagkatakisS10}. The main problem with most dimensionality reduction methods is that they are unable to take into account the supervised information, and as a result, the subspaces identified by the dimensionality reduction methods are usually suboptimal.


There are three challenges in learning a metric directly from the original high dimensional space:
\begin{compactitem}
\item {\it Large number of constraints}: A large number of training constraints are usually required to avoid the overfitting of high dimensional DML. The total number of triplet constraints could be up to $\Ocal(n^3)$ where $n$ is the number of examples.
\item {\it Computational challenge}: DML has to learn a matrix of size $d\times d$, where $d$ is the dimensionality of data and $d=134,016$ in our study. The $\Ocal(d^2)$ number of variables leads to two computational challenges in finding the optimal metric. First, it results in a slower convergence rate in solving the related optimization problem~\cite{QiTZCZ09}. Second, to ensure the learned metric to be positive semi-definitive (PSD), most DML algorithms require, at {\it every} iteration of optimization, projecting the intermediate solution onto a PSD cone, an expensive operation with complexity of $\Ocal(d^3)$ (at least $\Ocal(d^2)$).
\item {\it Storage limitation}: It can be expensive to simply save $\Ocal(d^2)$ number of variables in memory. For example, in our study, it would take more than 130 GB to store the completed metric in memory, which adds more complexity to the already difficult optimization problem.
\end{compactitem}

In this work, we propose a multi-stage metric learning framework for high dimensional DML that explicitly addresses these challenges. First, to deal with a large number of constraints used by high dimensional DML, we divide the original optimization problem into multiple stages. At each stage, only a small subset of constraints that are difficult to be classified by the currently learned metric will be adaptively sampled and used to improve the learned metric. By setting the regularizer appropriately, we can prove that the final solution is optimized over all appeared constraints. Second, to handle the computational challenge in each subproblem, we extend the theory of {\it dual random projection}~\cite{Zhang13}, which was originally developed for linear classification problems, to DML. The proposed method enjoys the efficiency of random projection, and on the other hand learns a distance metric of size $d\times d$. This is in contrast to most dimensionality reduction methods that learn a metric in a {\it reduced} space. Finally, to handle the storage problem, we propose to maintain a low rank copy of the learned metric by a randomized algorithm for low rank matrix approximation. It not only accelerates the whole learning process but also regularizes the learned metric to avoid overfitting. Extensive comparisons on benchmark FGVC datasets verify the effectiveness and efficiency of the proposed method.

The rest of the paper is organized as follows: Section~\ref{sec:relate} summarizes related work for DML. Section~\ref{sec:method} describes the details of the proposed method. Section~\ref{sec:exp} shows the results of the empirical study, and Section~\ref{sec:conclude} concludes this work with future directions.

\section{Related Work}
\label{sec:relate}

Many algorithms have been developed for DML~\cite{DavisKJSD07,weinberger2009,XingNJR02} and a detailed review can be found in two survey papers~\cite{Kulis13,liu2006}. Some of them are based on pairwise constraints~\cite{DavisKJSD07,XingNJR02}, while others focus on optimizing triplet constraints~\cite{ChechikSSB10,weinberger2009}. In this paper, we adopt triplet constraints, which exactly serve our purpose for addressing the second issue of FGVC. Although numerous studies were devoted to DML, few examined the challenges of high dimensional DML. A common approach for high dimensional DML is to project data into a low dimensional space, and learn a metric in the space of reduced dimension, which often leads to a suboptimal performance. An alternative approach is to assume $M$ to be of low rank by writing $M$ as $M = LL^{\top}$~\cite{DavisD08,weinberger2009}, where $L$ is a tall rectangle matrix and the rank of $M$ is fixed in advance of applying DML methods. Instead of learning $M$, these approaches directly learn $L$ from data. The main shortcoming of this approach is that it has to solve a non-convex optimization problem, making it computationally less attractive. Several recent studies~\cite{Lim13,QiTZCZ09} address high dimensional DML by assuming $M$ to be sparse. Although resolving the storage problem, they still suffer from high cost in optimizing $\Ocal(d^2)$ variables.

\section{Multi-stage Metric Learning}
\label{sec:method}

The proposed DML algorithm focuses on triplet constraints so as to pull the small portion of nearest examples from the same class together~\cite{weinberger2009}. Let $\D = \{(\x_i, y_i), i=1, \ldots, n\}$ be a collection of $n$ training images, where $\x_i \in \R^d$ and $y_i$ is the class assignment of $\x_i$. Given a distance metric $M$, the distance between two data points $\x_i$ and $\x_j$ is measured by
\[
d_M(\x_i, \x_j) = (\x_i - \x_j)^{\top}M(\x_i - \x_j)
\]
Let $\{\x_i^t, \x_j^t, \x_k^t\}(t=1, \ldots, N)$ be a set of $N$ triplet constraints derived from the training examples in $\D$. Since in each constraint $(\x_i^t, \x_j^t, \x_k^t)$, $\x_i^t$ and $\x_j^t$ share the same class assignment which is different from that of $\x_k^t$, we expect $d_M(\x_i^t, \x_j^t) < d_M(\x_i^t, \x_k^t)$. As a result, the optimal distance metric $M$ is learned by solving the following optimization problem
\begin{eqnarray}
\min\limits_{M \in S_d, M\succeq 0}\! \frac{\lambda}{2}\|M\|_F^2\! + \!\sum_{t=1}^N\! \ell(d_M\!(\x_i^t,\! \x_k^t)\! -\! d_M\!(\x_i^t,\! \x_j^t)) \label{eqn:1}
\end{eqnarray}
where $S_d$ includes all $d\times d$ real symmetric matrices and $\ell(\cdot)$ is a loss function that penalizes the objective function when $d_M(\x_i^t, \x_k^t)$ is not significantly larger than $d_M(\x_i^t, \x_j^t)$. In this study, we choose the smoothed hinge loss~\cite{tzhang2012} that appears to be more effective for optimization than the hinge loss while keeping the benefit of large margin
\begin{eqnarray*}
\ell(x) &=& \left\{\begin{array}{l@{\quad:\quad}r} 0 & x>1\\ 1-x-\gamma/2 & x<1-\gamma\\ \frac{1}{2\gamma} (1-x)^2 & o.w. \end{array}\right.
\end{eqnarray*}

One main computational challenge of DML comes from the PSD constraint $M \succeq 0$ in (\ref{eqn:1}). We address this challenge by following the one projection paradigm~\cite{ChechikSSB10} that first learns a metric $M$ without the PSD constraint and then projects $M$ to the PSD cone at the very end of the learning process. Hence, in this study, we will focus on the following optimization problem for FGVC
\begin{eqnarray}
\min\limits_{M \in S_d} \frac{\lambda}{2}\|M\|_F^2 + \sum_{t=1}^N \ell(\langle A_t, M \rangle) \label{eqn:opt}
\end{eqnarray}
where $A_t = (\x_i^t - \x_k^t)(\x_i^t - \x_k^t)^{\top} - (\x_i^t - \x_j^t)(\x_i^t - \x_j^t)^{\top}$ is introduced as a matrix representation for each triplet constraint, and $\langle \cdot, \cdot \rangle$ represents the dot product between two matrices.

We will discuss the strategies to address the three challenges of high dimensional DML, and summarize the framework of high dimensional DML for FGVC at the end of this section.

\subsection{Constraints Challenge: Multi-stage Division}

In order to reliably determine the distance metric in a high dimensional space, a large number of training examples are needed to avoid the overfitting problem. Since the number of triplet constraints can be $\Ocal(n^3)$, the number of summation terms in (\ref{eqn:opt}) can be extremely large, making it difficult to effectively solve the optimization problem in (\ref{eqn:opt}). Although learning with active set may help reduce the number of constraints~\cite{weinberger2009}, the number of active constraints can still be very large since many images in FGVC from different categories are visually similar, leading to many mistakes. To address this challenge, we divide the learning process into multiple stages. At the $s$-th stage, let $M_{s-1}$ be the distance metric learned from the last stage. We sample a subset of active triplet constraints that are difficult to be classified by $M_{s-1}$ (i.e., incur large hinge loss){\footnote{The strategy of finding hard constraints at each stage is also applied by cutting plane methods~\cite{nesterov2004} and active learning~\cite{settles2010}.}}. Given $M_{s-1}$ and the sampled triplet constraints $\N_s$, we update the distance metric by solving the following optimization problem
\begin{eqnarray}\label{primal:s}
\min_{M_s\in S_d} \frac{\lambda}{2}\|M_s-M_{s-1}\|_F^2+\sum_{t \in \N_s} \ell(\langle A_t,M_s \rangle)
\end{eqnarray}

Although only a small size of constraints is used to improve the metric at each stage, we have
\begin{thm}
The metric learned by solving the problem (\ref{primal:s}) also optimizes the following objective function
\[
\min_{M\in S_d}\frac{\lambda}{2}\|M\|_F^2 + \sum_{k=1}^{s} \sum_{t \in \N_k} \ell(\langle A_t, M\rangle )
\]
\end{thm}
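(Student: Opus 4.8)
The plan is to exploit the fact that both problems are convex and unconstrained over the linear space $S_d$ (the PSD constraint has already been dropped under the one-projection paradigm), so that a point is a global minimizer if and only if it is stationary. I would therefore reduce the statement to a comparison of first-order conditions. Since each $A_t$ is symmetric, the gradient of $\langle A_t, M\rangle$ is $A_t\in S_d$, so differentiating the per-stage objective (\ref{primal:s}) stays inside $S_d$ and its minimizer $M_s$ is characterized by
\[
\lambda (M_s - M_{s-1}) + \sum_{t\in\N_s}\ell'(\langle A_t, M_s\rangle)\,A_t = 0 .
\]
Rearranging gives the recursion $\lambda M_s = \lambda M_{s-1} - \sum_{t\in\N_s}\ell'(\langle A_t,M_s\rangle)A_t$.

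Next I would initialize with $M_0 = 0$, which is forced by the base case $s=1$ (stage~1 must reproduce the plain $\frac{\lambda}{2}\|M\|_F^2$ regularizer of the cumulative objective), and unroll the recursion to obtain
\[
\lambda M_s = -\sum_{k=1}^{s}\sum_{t\in\N_k}\ell'(\langle A_t,M_k\rangle)\,A_t .
\]
On the other hand, the cumulative objective is convex, so its minimizer $M^\star$ is characterized by $\lambda M^\star + \sum_{k=1}^{s}\sum_{t\in\N_k}\ell'(\langle A_t,M^\star\rangle)A_t = 0$. The goal is then an induction on $s$: assuming $M_{s-1}$ minimizes the cumulative objective through stage $s-1$, show that $M_s$ satisfies the stationarity condition for stage $s$, and conclude $M_s=M^\star$ by convexity.

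I expect the induction step to be the main obstacle. The unrolled expression accumulates gradient coefficients $\ell'(\langle A_t, M_k\rangle)$ evaluated at the \emph{stage-$k$} iterate, whereas the cumulative stationarity condition requires each coefficient evaluated at $M_s$. These two sums coincide only if every previously sampled constraint lies in a region where $\ell'$ is unchanged between $M_k$ and $M_s$ — for the smoothed hinge loss, the flat zero-loss region $\langle A_t, M\rangle > 1$, on which $\ell'=0$. Hence the crux is to establish that constraints handled in earlier stages stay satisfied (inactive) at all later solutions; this is precisely where the proximal anchor $\frac{\lambda}{2}\|M_s-M_{s-1}\|_F^2$, which keeps consecutive metrics close, and the policy of sampling only constraints still hard for $M_{s-1}$ do the real work. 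I would make this inactivity condition explicit as a hypothesis and then verify that, under it, the earlier-stage terms $\ell'(\langle A_t,M_k\rangle)A_t$ and $\ell'(\langle A_t,M_s\rangle)A_t$ agree term by term (both vanishing), which closes the induction. This condition is essential rather than cosmetic: if earlier constraints remain in the quadratic (active) branch of $\ell$, the two stationarity conditions genuinely differ and the claimed identity can fail, so the argument must lean on the flat region of the loss to force agreement.
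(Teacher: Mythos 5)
Your proposal is correct (as a conditional, exact argument) but it takes a genuinely different route from the paper. The paper never touches first-order conditions: it defines the cumulative objective of the first $s-1$ stages as $\mathcal{L}_{s-1}(M)$, Taylor-expands it around $M_{s-1}$, drops the linear term on the grounds that $M_{s-1}$ approximately minimizes $\mathcal{L}_{s-1}$, and replaces the Hessian quadratic form by $\frac{\lambda}{2}\|M-M_{s-1}\|_F^2$ via $\lambda$-strong convexity, so that the cumulative objective (\ref{eqn:obj-s}) collapses, up to an additive constant, into the per-stage problem (\ref{primal:s}). In other words, the paper's reading of the theorem is that the proximal term is a second-order surrogate for the entire history, and its proof is explicitly an approximation argument (the $\approx$ in (\ref{eqn:approx})), not an exact equivalence. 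Your stationarity-plus-unrolling analysis is exact where it applies, and the hypothesis you isolate --- that constraints from earlier stages keep the same value of $\ell'$ at later iterates --- is essentially the same condition the paper hides inside its two approximations: the Hessian of $\mathcal{L}_{s-1}$ equals $\lambda I$ exactly when the old loss terms contribute no curvature between $M_{s-1}$ and $M$, i.e.\ when their arguments stay in a branch of the smoothed hinge where $\ell'$ is constant, and the dropped gradient term is the same term-by-term discrepancy you identify. So the paper's route buys a clean design justification (the regularizer in (\ref{primal:s}) is precisely the quadratic model of the accumulated objective, with no explicit assumption on where old constraints sit) at the price of rigor, while your route buys an exact statement that makes the hidden assumption explicit and ties it to the loss geometry and to the policy of sampling only constraints that are still hard. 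One small refinement: agreement of $\ell'$ does not require the flat zero-loss branch specifically; the linear branch, where $\ell' \equiv -1$, works equally well, so your hypothesis can be stated slightly more generally than ``earlier constraints stay satisfied.''
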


\begin{proof}
Consider the objective function for the first $s$ stages
\begin{eqnarray}
\min_{M\in S_d}\underbrace{\frac{\lambda}{2}\|M\|_F^2 + \sum_{k=1}^{s-1} \sum_{t \in \N_k} \ell(\langle A_t, M\rangle )}_{:= \mathcal{L}_{s-1}(M)} + \sum_{t \in \N_s} \ell(\langle A_t, M\rangle) \label{eqn:obj-s}
\end{eqnarray}
It is obvious that $\mathcal{L}_{s-1}$ is strongly convex, so we have (Chapter 9, \cite{boyd2009convex})
\begin{eqnarray*}
\mathcal{L}_{s-1}(M)\! =\! \mathcal{L}_{s-1}(M_{s-1})\! +\! \langle \nabla\mathcal{L}_{s-1}(M_{s-1}),M-M_{s-1} \rangle\\
+\frac{1}{2} \langle (M-M_{s-1})\nabla^2\mathcal{L}_{s-1}(M'),M-M_{s-1} \rangle
\end{eqnarray*}
for some $M'$ between $M$ and $M_{s-1}$.

Since $M_{s-1}$, the solution obtained from the first $s-1$ stages, approximately optimizes $\mathcal{L}_{s-1}(M)$ and $\mathcal{L}_{s-1}$ is $\lambda$-strongly convex, then
\begin{eqnarray}
\mathcal{L}_{s-1}(M) \approx \mathcal{L}_{s-1}(M_{s-1}) + \frac{\lambda}{2} \|M - M_{s-1}\|_F^2 \label{eqn:approx}
\end{eqnarray}

We finish the proof by replacing $\mathcal{L}_{s-1}(M)$ in (\ref{eqn:obj-s}) with the approximation in (\ref{eqn:approx}).
\end{proof}

\paragraph{Remark} This theorem demonstrates that the metric learned from the last stage is optimized over constraints from all stages. Therefore, the original problem could be divided into several subproblems and each of them has an affordable number of active constraints. Fig.~\ref{fig:3} summaries the framework of the multi-stage learning procedure.

\begin{figure}[!ht]
\centering
\includegraphics[width = 3in,height = 1.2in]{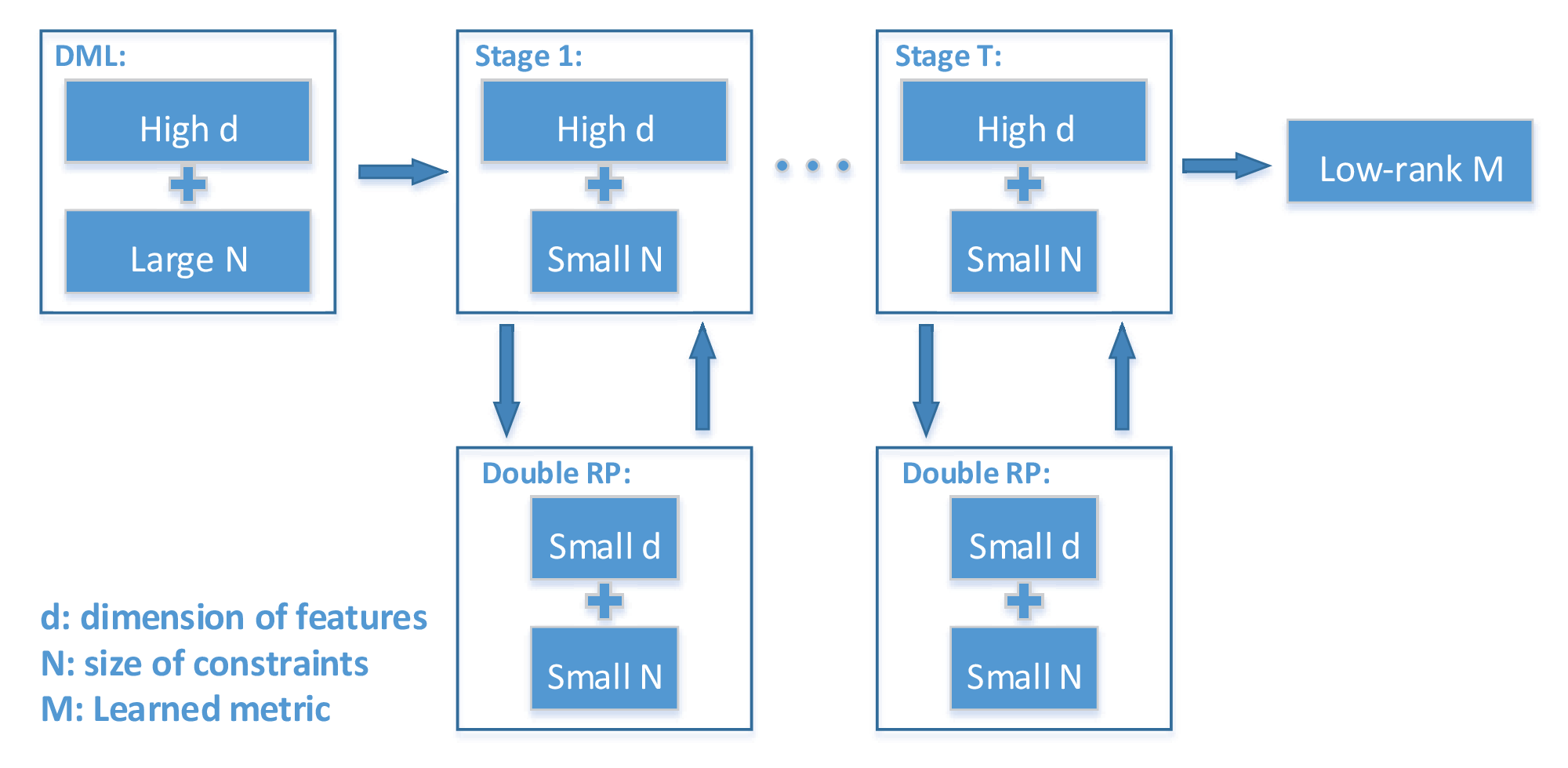}\\
\caption{The framework of the proposed method.}\label{fig:3}
\end{figure}

\subsection{Computational Challenge: Dual Random Projection}

Now we try to solve the high dimensional subproblem by dual random projection technique. To simplify the analysis, we investigate the subproblem at the first stage and the following stages could be analyzed in the same way. By introducing the convex conjugate $\ell_*$ for $\ell$ in (\ref{primal:s}), the dual problem of DML is
\begin{eqnarray}\label{dual:1}
 \max\limits_{{\bm \alpha} \in \R^{|\N_1|}} -\sum_{t=1}^{|\N_1|} \ell_*(\alpha_t) - \frac{1}{2\lambda}{\bm \alpha}^\top G {\bm \alpha}
\end{eqnarray}
where $\alpha_t$ is the dual variable for $A_t$ and $G$ is a matrix defined as $G_{a,b} = \langle A_a,A_b\rangle$. $M_1 = - \frac{1}{\lambda} \sum_{t=1}^{|\N_1|} \alpha_t A_t$ by setting the gradient with respect to $M_1$ to zero.
Let $R_1, R_2\in \R^{d\times m}$ be two Gaussian random matrices, where $m$ is the number of random projections ($m \ll d$) and $R_1^{i,j}, R_2^{i,j} \sim \N(0, 1/m)$. For each triplet constraint, we project its representation $A_t$ into the low dimensional space using the random matrices, i.e. $\Ah_t = R_1^{\top}A_tR_2$. By using double random projections, which is different from the single random projection in~\cite{Zhang13}, we have
\begin{lemma}
$\forall A_a, A_b$, the double random projections preserve the pairwise similarity between them: $\E[\langle \Ah_a,\Ah_b\rangle] = \langle A_a, A_b\rangle$
\end{lemma}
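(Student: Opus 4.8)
The plan is to express the matrix inner product as a trace and then peel off the two independent random matrices one at a time, using only the basic second-moment identity for a Gaussian projection matrix. First I would write, using $\Ah_t = R_1^\top A_t R_2$ and the definition $\langle X, Y\rangle = \mathrm{tr}(X^\top Y)$,
\begin{eqnarray*}
\langle \Ah_a, \Ah_b\rangle = \mathrm{tr}\bigl(R_2^\top A_a^\top R_1 R_1^\top A_b R_2\bigr),
\end{eqnarray*}
where I have used $(R_1^\top A_a R_2)^\top = R_2^\top A_a^\top R_1$. (Each $A_t$ is a difference of symmetric rank-one matrices, hence symmetric, so $A_a^\top = A_a$; retaining the transpose throughout is harmless either way.)

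The second ingredient is the identity $\E[R_1 R_1^\top] = I_d$. Writing $R_1 = [r_1, \dots, r_m]$ in terms of its columns gives $R_1 R_1^\top = \sum_{j=1}^m r_j r_j^\top$, and since the entries are drawn independently from $\N(0, 1/m)$, each column satisfies $\E[r_j r_j^\top] = (1/m) I_d$. Summing over the $m$ columns yields $\E[R_1 R_1^\top] = I_d$, and the same holds for $R_2$.

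With these two facts I would take expectations sequentially, exploiting that $R_1$ and $R_2$ are independent. Conditioning on $R_2$ and integrating over $R_1$ first,
\begin{eqnarray*}
\E\bigl[\langle \Ah_a, \Ah_b\rangle\bigr] = \E_{R_2}\Bigl[\mathrm{tr}\bigl(R_2^\top A_a^\top \,\E_{R_1}[R_1 R_1^\top]\, A_b R_2\bigr)\Bigr] = \E_{R_2}\Bigl[\mathrm{tr}\bigl(R_2^\top A_a^\top A_b R_2\bigr)\Bigr].
\end{eqnarray*}
Applying the cyclic property of the trace to move $R_2 R_2^\top$ to the right and using $\E[R_2 R_2^\top] = I_d$,
\begin{eqnarray*}
\E\bigl[\langle \Ah_a, \Ah_b\rangle\bigr] = \mathrm{tr}\bigl(A_a^\top A_b\, \E_{R_2}[R_2 R_2^\top]\bigr) = \mathrm{tr}(A_a^\top A_b) = \langle A_a, A_b\rangle,
\end{eqnarray*}
which is the claim.

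The computation is elementary; the only point requiring care is the bookkeeping of the order in which the two expectations are taken. Because $R_1$ enters only through the inner factor $R_1 R_1^\top$ and $R_2$ only through the outer factor $R_2 R_2^\top$, independence lets me replace each by its mean $I_d$ in turn, while the cyclic invariance of the trace is what exposes each matrix in the position where this substitution is valid. The genuine subtlety, and presumably the reason this unbiasedness is isolated as a lemma, is that it guarantees correctness only \emph{in expectation}; any high-probability guarantee would instead hinge on controlling the variance of $\langle \Ah_a, \Ah_b\rangle$ under double projection, which is the delicate estimate one would need to build on top of this statement.
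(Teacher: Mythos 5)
Your proof is correct: the identity $\E[R_1R_1^{\top}]=\E[R_2R_2^{\top}]=I_d$ for the $\N(0,1/m)$-normalized projections, independence of $R_1$ and $R_2$, and linearity plus cyclic invariance of the trace are exactly the ingredients needed, and each is applied validly. The paper omits the argument entirely (stating only that ``the proof is straightforward''), and your computation is precisely the standard one it has in mind, so there is nothing to compare beyond noting that you have filled in the gap correctly.
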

The proof is straightforward. According to the lemma, the dual variables in (\ref{dual:1}) can be estimated in the low dimensional space as
\begin{eqnarray}\label{dual:2}
 \max\limits_{{\bm \hat{\alpha}} \in \R^{|\N_1|}} -\sum_{t=1}^{|\N_1|} \ell_*(\hat{\alpha}_t) - \frac{1}{2\lambda}{\bm \hat{\alpha}}^\top \Gh {\bm \hat{\alpha}}
\end{eqnarray}
where $\Gh(a,b) = \langle \Ah_a,\Ah_b\rangle$.
Then, by the definition of convex conjugate, each dual variable $\hat{\alpha}_t$ in (\ref{dual:2}) can be further estimated by $\ell'(\langle \Ah_t, \Mh_1 \rangle)$, where $\Mh_1 \in \R^{m\times m}$ is the metric learned in the reduced space. Generally, $\Mh_s$ is learned by solving the following optimization problem
\begin{eqnarray}\label{primal:r}
\min_{\Mh_s\in S_m} \frac{\lambda}{2}\|\Mh_s-\Mh_{s-1}\|_F^2+\sum_{t=1}^{|\N_s|}\ell(\langle\Ah_t,\Mh_s\rangle)
\end{eqnarray}
Since the size of $\Mh_s \in \R^{m\times m}$ is significantly smaller than that of $M_s$, (\ref{primal:r}) can be solved much more efficiently than (\ref{primal:s}). In our implementation, a simple stochastic gradient descent (SGD) method is developed to efficiently solve the optimization problem in (\ref{primal:r}). Given $\Mh_1$, the final distance metric $M_1 \in \R^{d\times d}$ in the original space is estimated as
\begin{eqnarray}\label{eqn:M}
M_1 &=& - \frac{1}{\lambda} \sum_{t=1}^{|\N_1|} \hat{\alpha}_t A_t
\end{eqnarray}

\subsection{Storage Challenge: Low Rank Approximation}
\label{randomized}

Although (\ref{eqn:M}) allows us to recover the distance metric $M$ in original $d$ dimensional space from the dual variables $\{\alpha_t\}_{t=1}^{|\N|}$, it is expensive, if not impossible, to save $M$ in memory since $d$ is very large in FGVC~\cite{anelia13}. To address this challenge, instead of saving $M$, we propose to save the low rank approximation of $M$. More specifically, let $\sigma_1, \ldots, \sigma_r$ be the first $r \ll d$ eigenvalues of $M$, and let $\u_1, \ldots, \u_r$ be the corresponding eigenvectors. We approximate $M$ by a low rank matrix $M' = \sum_{i=1}^r \sigma_i\u_i\u_i^{\top} = LL^\top$. Different from existing DML methods that directly optimize $L$~\cite{WangYYLHG10}, we obtain $M$ first and then decompose it to avoid suboptimal solution. Unlike $M$ that requires $\Ocal(d^2)$ storage space, it only takes $\Ocal(rd)$ space to save $M'$ and $r$ could be an arbitrary value. In addition, the low rank metric accelerates the sampling step by reducing the cost of computing distance from $\Ocal(d)$ to $\Ocal(r)$. Low rank is also a popular regularizer to avoid overfitting when learning high dimensional metric~\cite{Lim13}. However, the key issue is how to efficiently compute the eigenvectors and eigenvalues of $M$ at {\it each} stage. This is particularly challenging in our case as $M$ in (\ref{eqn:M}) even can not be computed explicitly due to its large size.

To address this problem, first we investigate the structure of the recovering step for the $s$-th stage as in (\ref{eqn:M})
\begin{eqnarray*}
M_s &=& M_{s-1}- \frac{1}{\lambda} \sum_{t=1}^{|\N_s|} \alpha_t^s A_t^s\\
&=& M_{s-2} - \frac{1}{\lambda} (\sum_{t=1}^{|\N_s|} \alpha_t^s A_t^s+\sum_{t=1}^{|\N_{s-1}|} \alpha_t^{s-1} A_t^{s-1})\\
&=& - \frac{1}{\lambda}\sum_{k=1}^s \sum_{t=1}^{|\N_k|} \alpha_t^k A_t^k
\end{eqnarray*}

Therefore, we can express the summation as matrix multiplications. In particular, for each triplet $(\x_i^t,\x_j^t,\x_k^t)$, we denote its dual variable by $\alpha = \ell'(\langle \Ah, \Mh\rangle)$ and set the corresponding entries in a sparse matrix $C$ as
\begin{eqnarray}
C(i,j) = \frac{\alpha}{\lambda} ,\ C(j,i) = \frac{\alpha}{\lambda} ,\ C(j,j) = -\frac{\alpha}{\lambda} & &\nonumber\\
C(i,k) = -\frac{\alpha}{\lambda} ,\ C(k,i) = -\frac{\alpha}{\lambda} ,\ C(k,k) = \frac{\alpha}{\lambda} & &\label{eqn:C}
\end{eqnarray}
It is easy to verify that $M$ can be written as
\begin{eqnarray}\label{eqn:3}
M = XCX^\top
\end{eqnarray}

Second, we exploit the randomized theory~\cite{2009arXiv} to efficiently compute the eigen-decomposition of $M$. More specifically, let $R \in \R^{d\times q}$ ($q\ll d$) be an Gaussian random matrix. According to~\cite{2009arXiv}, with an overwhelming probability, most of the top $r$ eigenvectors of $M$ lie in the subspace spanned by the column vectors in $MR$ provided $q \geq r + k$, where $k$ is a constant independent from $d$. The limitation of the method is that it requires the appearance of the matrix $M$ for computing $MR$ while keeping the whole matrix is unaffordable here. Fortunately, by replacing $M$ with $XCX^\top$ according to (\ref{eqn:3}), we can approximate the top eigenvectors of $M$ by those of $XCX^\top R$ that is of size $d\times q$ and can be computed efficiently since $C$ is a sparse matrix. The overall computational cost of the proposed algorithm for low rank approximation is only $\Ocal(qnd)$, which is linear in $d$.  Note that the sparse matrix $C$ is cumulated over all stages.

Alg.~\ref{alg:1} summarizes the key steps of the proposed approach for low rank approximation, where $qr$ and $eig$ stand for QR and eigen decomposition of a matrix. Note that the distributed computing is particularly effective for the realization of the algorithm because the matrix multiplications $XCX^{\top}R$ can be accomplished in parallel, which is helpful when $n$ is also large.

\begin{algorithm}[h]
\caption{An Efficient Algorithm for Recovering $M$ and Projecting It onto PSD Cone from $\Mh$}
\begin{algorithmic}[1]

\STATE {\bf Input:} Dataset $X\in \R^{d\times n}$, $\Mh \in \R^{m\times m}$, the number of random combinations $q$

\STATE Compute a Gaussian random matrix $R \in \R^{d\times q}$
\STATE Compute the sparse matrix $C$ using (\ref{eqn:C})
\STATE $Y = R\times X^\top ,\ Y = Y\times C ,\ Y = Y\times X$
\STATE $[Q,R] = qr(Y)$
\STATE $B = Q^\top\times X^\top ,\ B = B\times C ,\ B = B\times X$
\STATE $[U,\Sigma] = eig(B)$
\STATE $U = Q*U$
\RETURN $L = [\sqrt{\sigma_1}\u_1,\cdots,\sqrt{\sigma_r}\u_r]$ and $M = LL^\top$, where $\u_i$ is the $i$th column of $U$ and $\sigma_i$ is the $i$th positive diagonal element of $\Sigma$
\end{algorithmic}\label{alg:1}
\end{algorithm}

Alg.~\ref{alg:2} shows the whole picture of the proposed method.

\begin{algorithm}[t]
\caption{The {\bf M}ulti-{\bf s}tage {\bf M}etric {\bf L}earning Framework for High Dimensional DML (MsML)}
\begin{algorithmic}[1]

\STATE {\bf Input:} Dataset $X\in \R^{d\times n}$, the number of random projections $m$, the number of random combinations $q$, and the number of stages $T$

\STATE Compute two Gaussian random matrices $R_1, R_2 \in \R^{d\times m}$
\STATE Initialize $\Mh_0 = \mathbf{0} \in \R^{m\times m}$ and $M_0 = \mathbf{0} \in \R^{d\times d}$
\FOR{$s = 1, \ldots, T$}
    \STATE Sample one epoch active triplet constraints using $M_{s-1}$
    \STATE Estimate $\Mh_s$ by solving the optimization problem as in (\ref{primal:r}) with SGD
    \STATE Recover the distance metric $M_s$ in the $d$ dimensional space using Alg.~\ref{alg:1}
\ENDFOR
\RETURN $M_T$
\end{algorithmic}\label{alg:2}
\end{algorithm}

\section{Experiments}
\label{sec:exp}

DeCAF features~\cite{DonahueJVHZTD14} are extracted as the image representations in the experiments. Although it is from the activation of a deep convolutional network, which is trained on ImageNet~\cite{KrizhevskySH12}, it outperforms conventional visual features on many general tasks~\cite{DonahueJVHZTD14}. We concatenate features from the last three fully connected layers (i.e., DeCAF$_{5+6+7}$) and the dimension of resulting features is $51,456$.

We apply the proposed algorithm to learn a distance metric and use the learned metric together with a smoothed $k$-nearest neighbor classifier, a variant of $k$-NN, to predict the class assignments for test examples. Different from conventional $k$-NN, it first obtains $k$ reference centers for each class by clustering training images in each class into $k$ clusters. Then, it computes the query's distance to each class as the soft min of the distances between the test image and corresponding reference centers, and assigns the test image to the class with the shortest distance. It is more efficient when predicting, especially for large-scale training set, and the performance is similar to that of conventional one. We refer to the classification approach based on the metric learned by the proposed algorithm and the smoothed $k$-NN as {\bf MsML}, and the smoothed $k$-NN with Euclidean distance in the original space as {\bf Euclid}. Although the size of the covariance matrix is very large ($51,456\times 51,456$), its rank is low due to the small number of training examples, and thus PCA can be computed explicitly. The state-of-the-art DML algorithm, i.e. {\bf LMNN}~\cite{weinberger2009} with PCA as preprocess, is also included in comparison.
The one-vs-all strategy, based on the implementation of LIBLINEAR~\cite{REF08a}, is used as a baseline for FGVC, with the regularization parameter varied in the range $\{10^i\} (i=-2,\cdots,3)$. We refer to it as {\bf LSVM}. We also include the state-of-the-art results for FGVC in our evaluation. All the parameters used by MsML are set empirically, with the number of random projections $m = 100$ and the number of random combinations $q = 600$. PCA is applied for LMNN to reduce the dimensionality to $m$ before the metric is learned. LMNN is implemented by the code from the original authors and the recommended parameters are used~\footnote{We did vary the parameter slightly from the recommended values and did not find any noticeable change in the classification accuracy.}. To ensure that the baseline method fully exploits the training data, we set the maximum number of iterations for LMNN as $10^4$. These parameter values are used throughout all the experiments. All training/test splits are provided by datasets. {\bf Mean accuracy}, a standard evaluation metric for FGVC, is used to evaluate the classification performance. All experiments are run on a single machine with $16$ $2.10$GHz cores and $96$GB memory.

\subsection{Oxford Cats\&Dogs}
{\it cats\&dogs} contains $7,349$ images from $37$ cat and dog species~\cite{ParkhiVZJ12}. There are about 100 images per class for training and the rest are for test. Table~\ref{rlr} summaries the results. First, we observe that MsML is more accurate than the baseline LSVM. This is not surprising because the distance metric is learned from the training examples of all class assignments. This is in contrast to the one-vs-all approach used in LSVM that the classification function for a class $C$ is learned only by the examples with the class assignment of $C$. Second, our method performs significantly better than the baseline DML method, indicating that the unsupervised dimension reduction method PCA may result in suboptimal solutions for DML. Fig.~\ref{fig:4} compares the images that are most similar to the query images using the metric learned by the proposed algorithm (Column 8-10) to those based on the metric learned by LMNN (Column 5-7) and Euclid (Column 2-4). We observe that more images from the same class as the query are found by the metric learned by MsML than LMNN. For example, MsML is able to capture the difference between two cat species (longhair v.s. shorthair) while LMNN returns the very similar images with wrong class assignments. Third, MsML has overwhelming performance compared to all state-of-the-art FGVC approaches. Although the method~\cite{ParkhiVZJ12} using ground truth head bounding box and segmentation achieves $59.21\%$, MsML is $20\%$ better than it with only image information, which shows the advantage of the proposed method. Finally, it takes less than $0.2$ second to extract DeCAF features per image based on a $CPU$ implementation while a simple segmentation operator costs more than 2.5 seconds as reported in the study~\cite{anelia13}, making the proposed method for FGVC more appealing.

\begin{table}[!ht]
\centering
\caption{Comparison of mean accuracy($\%$) on {\it cats\&dogs} dataset. ``\#'' means that more information (e.g., ground true segmentation) is used by the method.}
\label{rlr}
\begin{tabular}{|l||c|}
\hline
Methods &    Mean Accuracy ($\%$)             \\\hline
Image only~\cite{ParkhiVZJ12}        &39.64  \\\hline
Det+Seg~\cite{anelia13}              &54.30   \\\hline
Image+Head+Body\#~\cite{ParkhiVZJ12}   &59.21   \\\hline
\hline
Euclid   &72.60                              \\\hline
LSVM     &77.63                            \\\hline
LMNN     &76.24                             \\\hline
MsML      &80.45                         \\\hline
MsML+     &81.18                             \\\hline
\end{tabular}
\end{table}

\begin{figure*}[!ht]
\centering
\includegraphics[width = 6in,height = 2in]{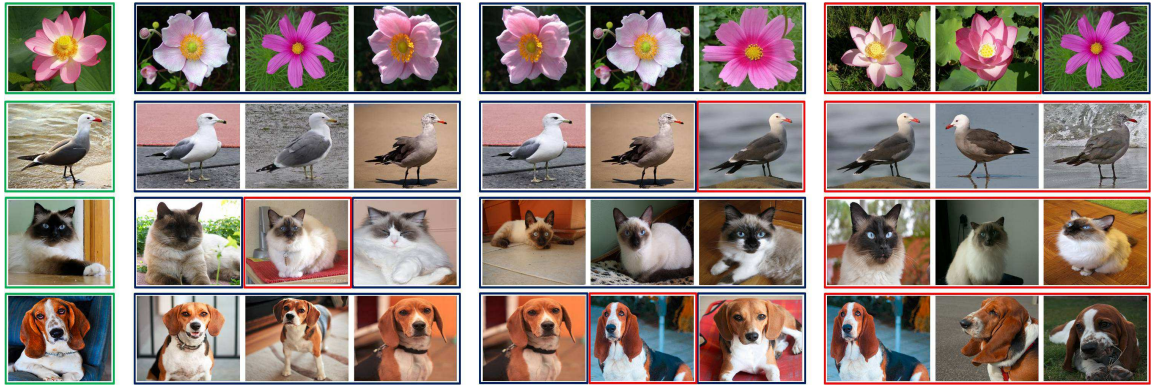}
\caption{Examples of retrieved images. The first column indicates the query images highlighted by green bounding boxes. Columns $2$-$4$ include the most similar images measured by Euclid. Columns $5$-$7$ show those by the metric from LMNN. Columns $8$-$10$ are from the metric of MsML. Images in columns $2$-$10$ are highlighted by red bounding boxes when they share the same category as queries, and blue bounding boxes if they are not.}\label{fig:4}
\end{figure*}

To evaluate the performance of MsML for extremely high dimensional features, we concatenate conventional features by using the pipeline for visual feature extraction that is outlined in~\cite{anelia13}. Specifically, we extract HOG~\cite{DalalT05} features at 4 different scales and encode them to $8K$ dimensional feature dictionary by the LLC method~\cite{WangYYLHG10}. A max pooling strategy is then used to aggregate local features into a single vector representation. Finally, $82,560$ features are extracted from each image and the total dimension is up to $134,016$. MsML with the combined features is denoted as {\bf MsML+} and it further improves the performance by about $1\%$ as in Table~\ref{rlr}. Note that the time of extracting these high dimensional conventional features is only $0.5$ second per image, which is still much cheaper than any segmentation or localization operator.

\subsection{Oxford 102 Flowers}

{\it 102flowers} is the Oxford flowers dataset for flower species~\cite{NilsbackZ08}, which consists of 8189 images from 102 classes. Each class has 20 images for training and rest for test. Table~\ref{rlr2} shows the results from different methods. We have the similar conclusion for the baseline methods. That is, MsML outperforms LSVM and LMNN significantly. Although LSVM already performs very well, MsML further improves the accuracy. Additionally, it is observed that even the performances of state-of-the-art methods with segmentation operators are much worse than that of MsML. Note that GT~\cite{NilsbackZ08} uses hand annotated segmentations followed by multiple kernel SVM, while MsML outperforms it about $3\%$ without any supervised information, which confirms the effectiveness of the proposed method.

\begin{table}[!ht]
\centering
\caption{Comparison of mean accuracy($\%$) on {\it 102flowers} dataset. ``\#'' means that more information (e.g., ground true segmentation) is used by the method.}
\label{rlr2}
\begin{tabular}{|l||c|}
\hline
Methods &    Mean Accuracy ($\%$)             \\\hline
Combined CoHoG~\cite{ItoK10} & 74.80         \\\hline
Combined Features~\cite{Nilsback09}  & 76.30 \\\hline
BiCoS-MT~\cite{ChaiLZ11}     & 80.00         \\\hline
Det+Seg~\cite{anelia13}  & 80.66   \\\hline
TriCoS~\cite{ChaiRLGZ12} & 85.20\\\hline
GT\#~\cite{NilsbackZ08}    & 85.60\\\hline
\hline
Euclid   &   76.21                           \\\hline
LSVM     &   87.14                          \\\hline
LMNN     &   81.93                           \\\hline
MsML     &   88.39                          \\\hline
MsML+    &   89.45                          \\\hline
\end{tabular}
\end{table}

Fig.~\ref{fig:2} illustrates the changing trend of test mean accuracy as the number of stages increases. We observe that MsML converges very fast, which verifies that multi-stage division is essential to the proposed framework.

\makeatletter\def\@captype{figure}\makeatother
\begin{minipage}{0.21\textwidth}
\centering
\includegraphics[width = 1.3in,height = 1in]{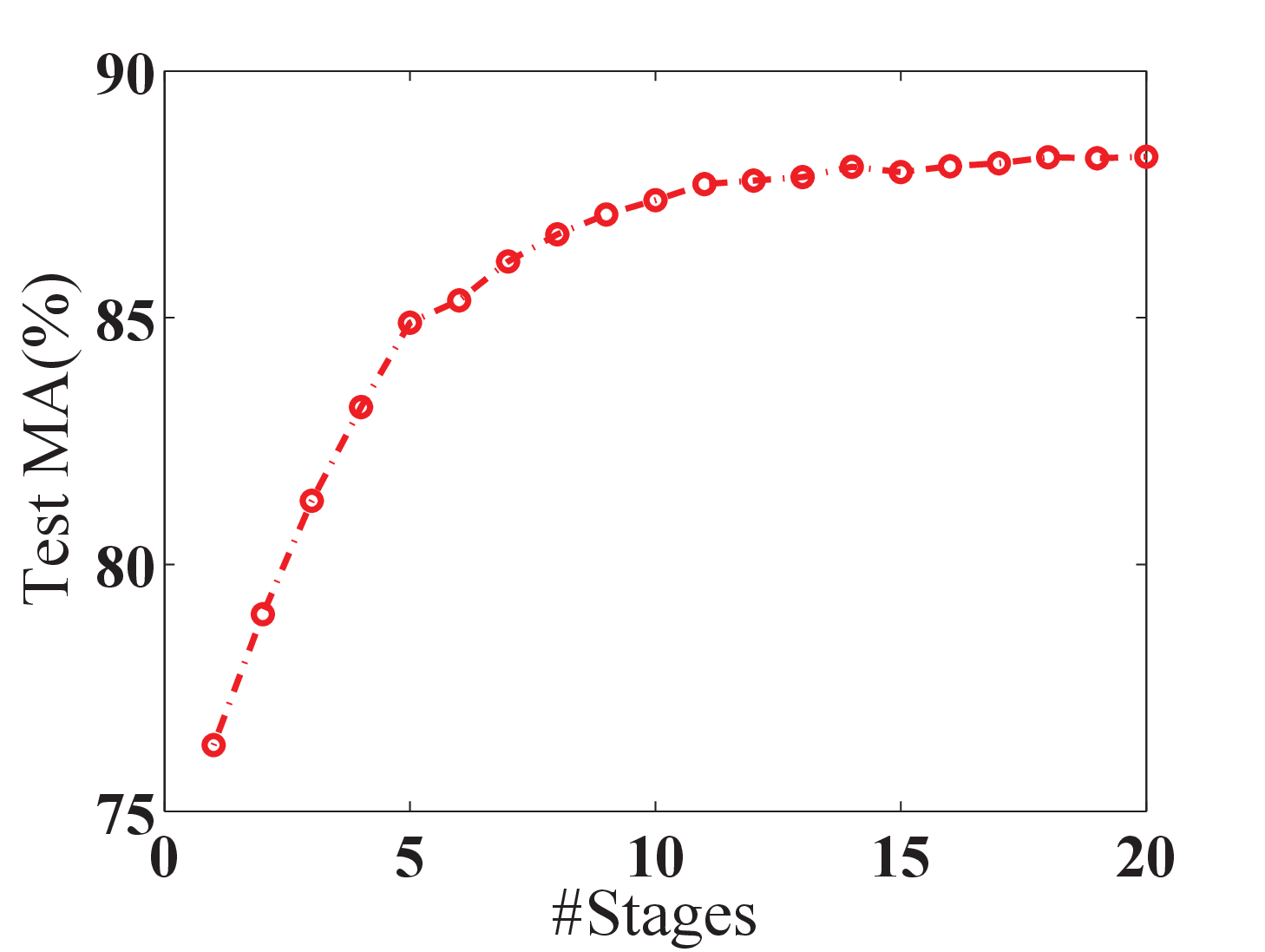}
\caption{Convergence curve of the proposed method on {\it 102flowers}.}\label{fig:2}
\end{minipage}
\makeatletter\def\@captype{figure}\makeatother
\begin{minipage}{0.21\textwidth}
\centering
\includegraphics[width = 1.3in,height = 1in]{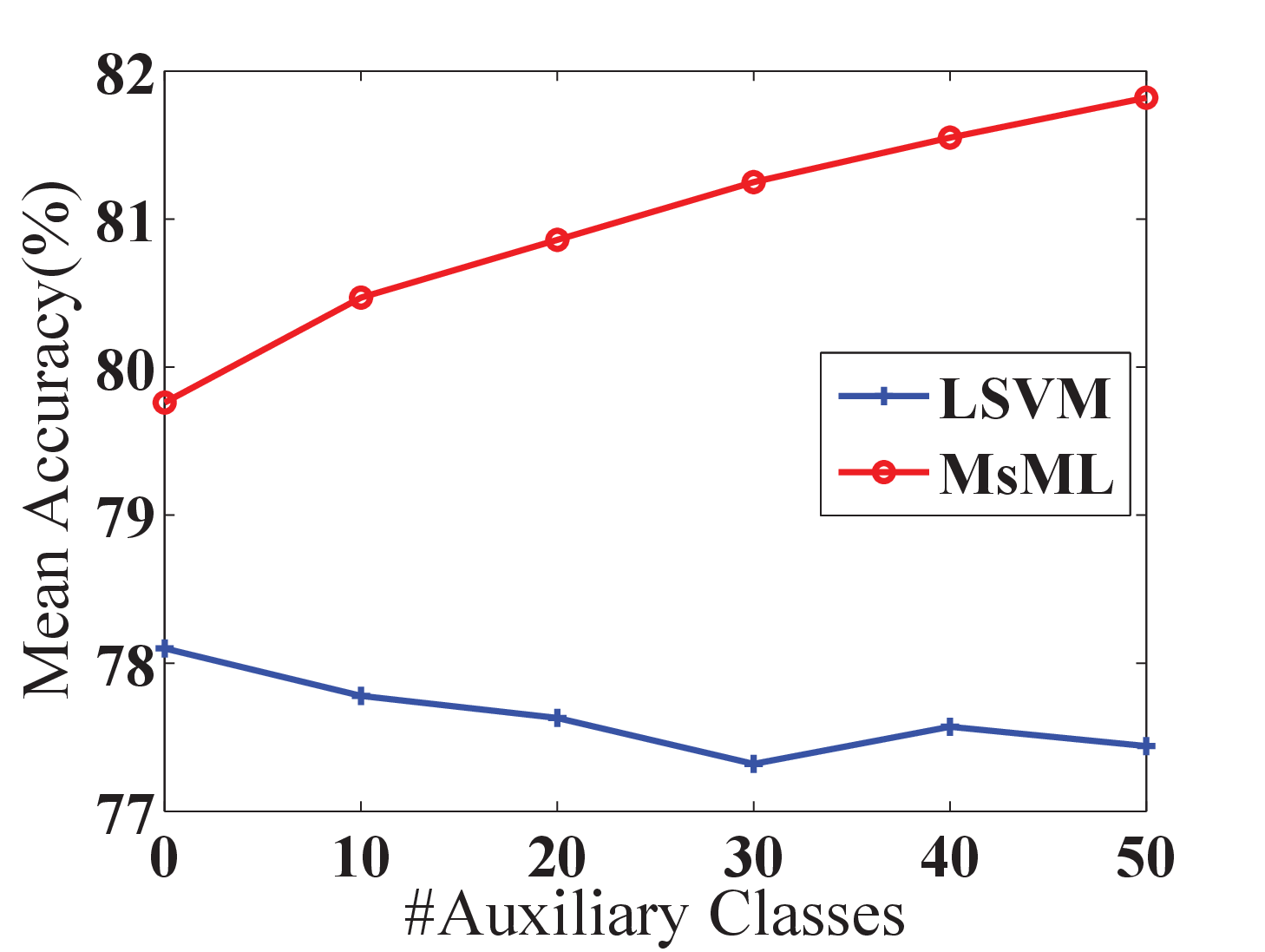}
\caption{Comparison with different size of classes on {\it birds11}.}\label{fig:5}
\end{minipage}

\begin{table}[!ht]
\centering
\caption{Comparison of mean accuracy($\%$) on {\it birds11} dataset. ``*'' denotes the method that mirrors training images.}
\label{rlr3}
\begin{tabular}{|l||c|}
\hline
Methods &    Mean Accuracy ($\%$) \\\hline
Symb~\cite{ChaiLZ13}  &56.60    \\\hline
POOF~\cite{BergB13}   &56.78   \\\hline
Symb*~\cite{ChaiLZ13}    & 59.40    \\\hline
Ali*~\cite{GavvesFSST13} & 62.70   \\\hline
DeCAF+DPD~\cite{DonahueJVHZTD14} &64.96 \\\hline
\hline
Euclid             &46.85                 \\\hline
LSVM               &61.44             \\\hline
LMNN               &51.04             \\\hline
MsML                &65.84                  \\\hline
MsML+               &66.61               \\\hline
MsML+*              &67.86             \\\hline
\end{tabular}
\end{table}

\subsection{Birds-2011}

{\it birds11} is the Caltech-USCD-200-2011 birds dataset for bird species~\cite{WahCUB2011}. There are 200 classes with $11,788$ images and each class has roughly 30 images for training. We use the version with ground truth bounding box. Table~\ref{rlr3} compares the proposed method to the state-of-the-art baselines. First, it is obvious that the performance of MsML is significantly better than all baseline methods as the observation above. Second, although Symb~\cite{ChaiLZ13} combines segmentation and localization, MsML outperforms it by $9\%$ without any time consuming operator. Third, Symb* and Ali* mirror the training images to improve their performances, while MsML is even better than them without this trick. Finally, MsML outperforms the method combining DeCAF features and DPD models~\cite{ZhangFID13}, which is due to the fact that most of studies for FGVC ignore choosing the appropriate base classifier and simply adopt linear SVM with the one-vs-all strategy. For comparison, we also report the result mirroring training images which is denoted as {\bf MsML+*}. It provides another $1\%$ improvement over MsML+ as shown in Table~\ref{rlr3}.

To illustrate the capacity of MsML in exploring the correlation among classes, which makes it more effective than a simple one-vs-all classifier for FGVC, we conduct one additional experiment. We randomly select 50 classes from {\it birds11} as the {\it target classes} and use the test images from the target classes for evaluation. When learning the metric, besides the training images from $50$ target classes, we sample $k$ classes from $150$ unselected ones as the {\it auxiliary} classes, and use training images from the auxiliary classes as additional training examples for DML. Fig.~\ref{fig:5} compares the performance of LSVM and MsML with the increasing number of auxiliary classes. It is not surprising to observe that the performance of LSVM decreases a little since it is unable to explore the supervision information in the auxiliary classes to improve the classification accuracy of target classes and more auxiliary classes just intensify the class imbalance problem. In contrast, the performance of MsML improves significantly with increasing auxiliary classes, indicating that MsML is capable of effectively exploring the training data from the auxiliary classes and therefore is particularly suitable for FGVC.

\subsection{Stanford Dogs}

{\it S-dogs} is the Stanford dog species dataset~\cite{khosla2011novel}. It contains $120$ classes and $20,580$ images, where 100 images from each class is used for training. Since it is the subset of ImageNet~\cite{ILSVRCarxiv14}, where DeCAF model is trained from, we just report the result in Table~\ref{rlr4} as reference.

\begin{table}[!ht]
\centering
\caption{Comparison of mean accuracy($\%$) on {\it S-dogs} dataset. ``*'' denotes the method that mirrors training images.}
\label{rlr4}
\begin{tabular}{|l||c|}
\hline
Methods &    Mean Accuracy ($\%$) \\\hline
SIFT~\cite{khosla2011novel}         &22.00\\\hline
Edge Templates~\cite{YangBWS12}     &38.00\\\hline
Symb~\cite{ChaiLZ13}     & 44.10    \\\hline
Symb*~\cite{ChaiLZ13}    & 45.60    \\\hline
Ali*~\cite{GavvesFSST13} & 50.10   \\\hline
\hline
Euclid             &59.22\\\hline
LSVM               &65.00\\\hline
LMNN               &62.17  \\\hline
MsML               &69.07  \\\hline
MsML+              &69.80      \\\hline
MsML+*             &70.31   \\\hline
\end{tabular}
\end{table}

\subsection{Comparison of Efficiency}

In this section, we compare the training time of the proposed algorithm for high dimensional DML to that of LSVM and LMNN. MsML is implemented by Julia, which is a little slower than C{\footnote{Detailed comparison can be found in http://julialang.org}}, while LSVM uses the LIBLINEAR package, the state-of-the-art algorithm for solving linear SVM implemented mostly in C. The core part of LMNN is also implemented in C. The time for feature extraction is not included here because it is shared by all the methods in comparison. The running time for MsML includes all operational cost (i.e., the cost for sampling triplet constraints, computing random projections and low rank approximation).

\begin{table}[!ht]
\centering
\caption{Comparison of running time (seconds).}
\label{rlr5}
\begin{tabular}{|l||p{1.4cm}|p{1.4cm}|p{1cm}|p{1cm}|}
\hline
Methods &{\it cats\&dogs}& {\it 102flowers} &{\it birds11} &{\it S-dogs}\\\hline
LSVM    &196.2 & 309.8            &1,417.0           &1,724.8             \\\hline
LMNN    &832.6& 702.7            &1,178.2           &1,643.6             \\\hline
MsML     &164.9& 174.4            &413.1            &686.3              \\\hline
MsML+    &337.2& 383.7            &791.3            &1,229.7             \\\hline
\end{tabular}
\end{table}

Table~\ref{rlr5} summarizes the results of the comparison. First, it takes MsML about $1/3$ of the time to complete the computation compared to LMNN. This is because MsML employs a stochastic optimization method to find the optimal distance metric while LMNN is a batch learning method. Second, we observe that the proposed method is significantly more efficient than LSVM on most of datasets. The high computational cost of LSVM mostly comes from two aspects. First, LSVM has to train one classification model for each class, and becomes significantly slower when the number of classes is large. Second, the fact that images from different classes are visually similar makes it computationally difficult to find the optimal linear classifier that can separate images of one class from images from the other classes. In contrast, the training time of MsML is independently from the number of classes, making it more appropriate for FGVC. Finally, the running time of MsML+ with $134,016$ features only doubles that of MsML, which verifies that the proposed method is linear in dimensionality ($\Ocal(d)$).

\section{Conclusion}
\label{sec:conclude}

In this paper, we propose a multi-stage metric learning framework for high dimensional FGVC problem, which addresses the challenges arising from high dimensional DML. More specifically, it divides the original problem into multiple stages to handle the challenge arising from too many triplet constraints, extends the theory of dual random projection to address the computational challenge for high dimensional data, and develops a randomized low rank matrix approximation algorithm for the storage challenge. The empirical study shows that the proposed method with general purpose features yields the performance that is significantly better than the state-of-the-art approaches for FGVC. In the future, we plan to combine the proposed DML algorithm with segmentation and localization to further improve the performance of FGVC. Additionally, since the proposed method is a general DML approach, we will try to apply it for other applications with high dimensional features.

\paragraph{Acknowledgments} Qi Qian and Rong Jin are supported in part by ARO (W911NF-11-1-0383), NSF (IIS-1251031) and ONR (N000141410631).

{\small
\bibliographystyle{ieee}
\bibliography{nmetric}
}

\end{document}